\newtheorem{definition}{Definition}
\newtheorem{proposition}{Proposition}
\newtheorem{example}{Example}
\newcommand{\oplans}[2]{\ensuremath{\operatorname{Oplans}(#1,#2)}}
\newcommand{\concl}[1]{\ensuremath{\operatorname{concl}(#1)}}
\title{Value-based  Practical Reasoning: Modal Logic + Argumentation}
\author{
Jieting Luo$^1$
\and
Beishui Liao$^2$\and
Dov Gabbay$^3$
\affiliations
$^1$University of Bern\\
$^2$Zhejiang University\\
$^3$King's College London
\emails
jieting.luo@inf.unibe.ch,
baiseliao@zju.edu.cn,
dov.gabbay@kcl.ac.uk
}
\begin{document}

\maketitle

\begin{abstract}
Autonomous agents are supposed to be able to finish tasks or achieve goals that are assigned by their users through performing a sequence of actions. Since there might exist multiple plans that an agent can follow and each plan might promote or demote different values along each action, the agent should be able to resolve the conflicts between them and evaluate which plan he should follow. In this paper, we develop a logic-based framework that combines modal logic and argumentation for value-based practical reasoning with plans. Modal logic is used as a technique to represent and verify whether a plan with its local properties of value promotion or demotion can be followed to achieve an agent's goal. We then propose an argumentation-based approach that allows an agent to reason about his plans in the form of supporting or objecting to a plan using the verification results.
\end{abstract}

\section{Introduction}
Autonomous agents are supposed to be able to perform value-based ethical reasoning based on their value systems in order to distinguish moral from immoral behavior. Existing work on value-based practical reasoning such as \cite{atkinson2018taking}\cite{bench2012using} \cite{liao2021representation} demonstrates how an agent can reason about what he should do among alternative action options that are associated with value promotion or demotion. However, agents are supposed to be able to finish tasks or achieve goals that are assigned by their users through performing a sequence of actions. Since there might exist multiple plans that an agent can follow and each plan might promote or demote different values along each action, the agent should be able to resolve the conflicts between them and evaluate which plan he should follow. If the decision-making problem concerns choosing a plan instead of an action, then we first need to know how an agent can see whether he can follow a plan to achieve his goal. Verification approaches that are developed based on modal logic only allow us to verify whether a goal can be achieved under specific conditions such as norm compliance assumptions \cite{aagotnes2007normative}\cite{knobbout2012reasoning}\cite{alechina2013reasoning}, namely telling us whether a plan works or not, but cannot tell us what we should do. For sure, we can collect the verification results regarding whether a plan promotes or demotes a specific set of values and then compare different plans using lifting approaches as what has been done in \cite{luo2019formal}. However, the order lifting problem is a major challenge in many areas of AI and no approach is ultimately “correct”. Moreover, the agent in our setting needs to lift the preference over values to the preference over plans with respect to value promotion and demotion, which even complicates the problem. Therefore, we need a more natural and intuitive approach. It has been shown that argumentation provides a useful mechanism to model and resolve conflicts \cite{dung1995acceptability}, and particularly can be used for the decision-making of artificial intelligence and provides explanation for that \cite{prakken2006combining}\cite{amgoud2007formalizing}. In this paper, we develop a logic-based framework that combines modal logic and argumentation for value-based practical reasoning with plans. Modal logic is used as a technique to represent and verify whether a plan with its local properties of value promotion or demotion can be followed to achieve an agent's goal. Using the verification results to construct arguments, we then propose an argumentation-based approach that allows an agent to reason about his plans in the form of support and objection without using lifting approaches. We prove several formal properties to characterize our approach, indicating it is consistent with our rationality of decision-making.

\section{Logical Framework}
The semantic structure of this paper is a transition system that represents the computational behavior of a system caused by an agent's actions in the agent's subjective view. It is basically a directed graph where a set of vertexes $S$ corresponds to possible states of the system, and the relation $\rightarrow \subseteq S \times Act \times S$ represents the possible transitions of the system. When a certain action $\alpha \in Act$ is performed, the system might progress from a state $s$ to a different state $s^\prime$ in which different propositions hold. Formally, 
\begin{definition}[Transition Systems]
Let $\Phi =\{p,q,...\}$ be a finite set of atomic propositional variables, a transition system is a tuple $T=(S,Act,\rightarrow,V)$ over $\Phi$, where 
\begin{itemize}
\item $S$ is a finite, non-empty set of states;
\item $Act$ is a finite, non-empty set of actions;
\item $\rightarrow \subseteq S \times Act \times S$ is a transition relation between states with actions, which we refer to as the transition relation labeled with an action; we require that for all $s \in S$ there exists an action $a \in Act$ and a state $s^\prime \in S$ such that $(s,a,s^\prime) \in \rightarrow$; we restrict actions to be deterministic, that is, if $(s,a,s^\prime) \in \rightarrow$ and $(s,a,s^{\prime\prime}) \in \rightarrow$, then $s^\prime = s^{\prime\prime}$; since the relation is partially functional, we write $s[\alpha]$ to denote the state $s^\prime$ for which it holds that $(s, \alpha, s^\prime) \in \rightarrow$; we also use $s[\alpha_1, \ldots, \alpha_n]$ to denote the resulting state for which a sequence of actions $\alpha_1, \ldots, \alpha_n$ succinctly execute from state $s$;
\item $V$ is a propositional valuation $V: S \to 2^{\Phi}$ that assigns each state with a subset of propositions which are true at state $s$; thus for each $s \in S$ we have $V(s) \subseteq \Phi$.
\end{itemize}
\end{definition}
Note that the model is deterministic: the same action performed in the same state will always result in the same resulting state. A pointed transition system is a pair $(T, s)$ such that $T$ is a transition system, and $s \in S$ is a state from $T$. Adopted from \cite{knobbout2016dynamic}\cite{knobbout2014reasoning}, the language $\mathcal{L}_{\Box}$ is propositional logic extended with action modality. Formally, its grammar is defined below:
\[\varphi ::= p \mid \lnot \varphi \mid \varphi \lor \varphi \mid \Box_{\alpha}\varphi \quad (p \in \Phi, \alpha \in Act) \]
Given a pointed transition system $(T,s)$, we define the semantics with respect to the satisfaction relation $\models$ inductively as follows:
\begin{itemize}
\item $T, s \models p$ iff $p \in V(s)$;
\item $T, s \models \lnot \varphi$ iff $T, s \not \models \varphi$;
\item $T, s \models \varphi \lor \psi$ iff $T, s \models \varphi$ or $T, s \models \psi$;
\item $T, s \models \Box_{\alpha} \varphi$ iff $s[\alpha] \models \varphi$.
\end{itemize}
The remaining classical logic connectives are assumed to be defined as abbreviations in terms of $\lnot$ and $\lor$ in the conventional manner. Given a pointed transition system $(T, s)$, we say that a sequence of actions $\alpha_1 \ldots \alpha_n$ brings about a $\varphi$-state if and only if $T, s \models \Box_{\alpha_1} \ldots \Box_{\alpha_n} \varphi$. As standard, we write $T \models \varphi$ if $T, s \models \varphi$ for all $s \in S$, and $\models \varphi$ if $T \models \varphi$ for all $T$. 

A transition system represents how a system progresses by an agent's actions. Besides, an agent in the system is assumed to have his own goal, which is a formula expressed in propositional logic $\mathcal{L}_{prop}$. It is indeed possible for an agent to have multiple goals and his preference over different goals. For example, a goal hierarchy is defined in \cite{aagotnes2007normative} to represent increasingly desired properties that the agent wishes to hold. However, we find that the setting about whether the agent has a goal or multiple goals is in fact not essential for our analysis, so we simply assume that the agent only has a goal for simplifying our presentation.

\begin{example}
\begin{figure}
  \centering
    \includegraphics[width=0.3\textwidth]{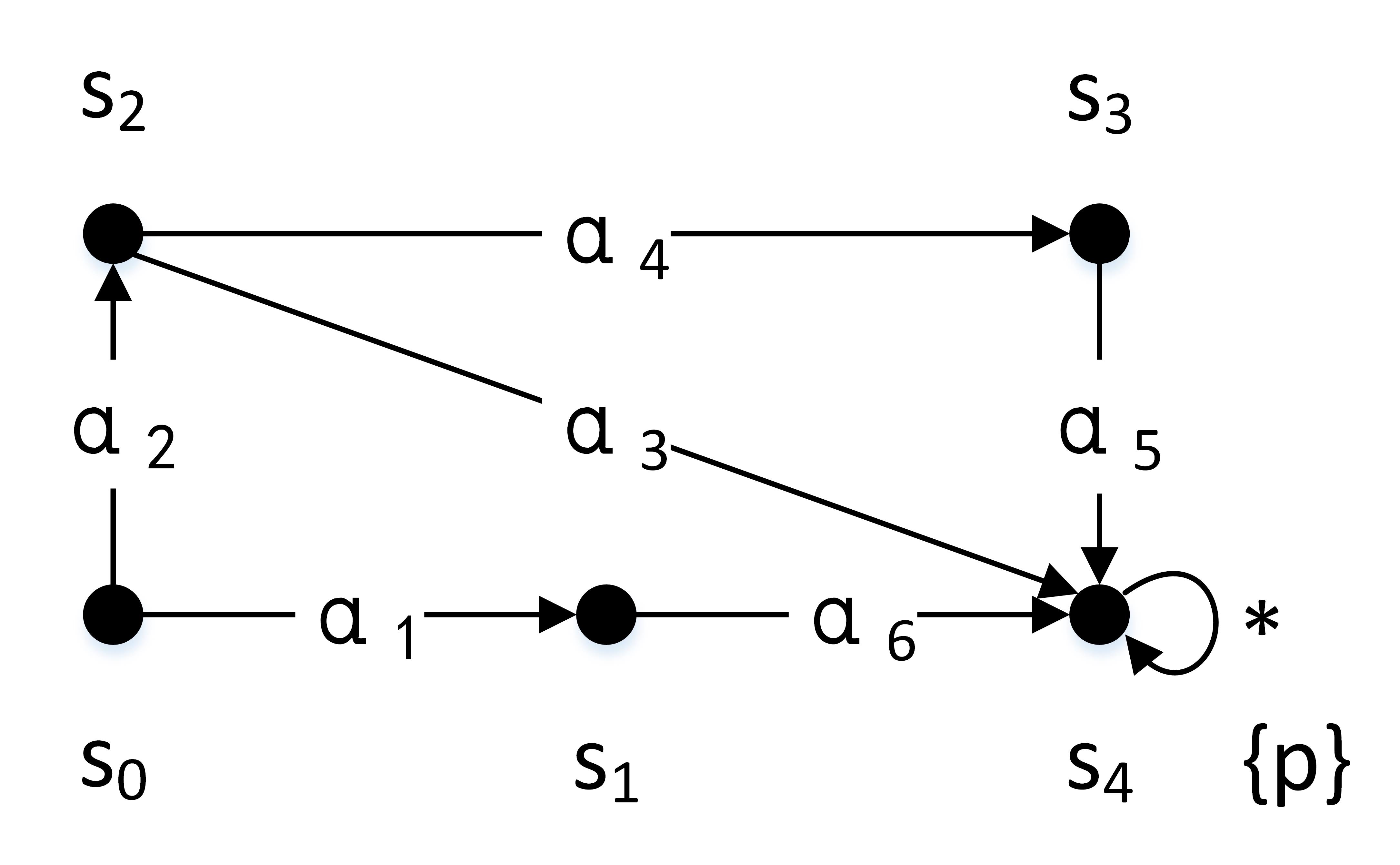}
    \caption{Transition system $T$.}\label{Drawing1}
\end{figure}

Consider the transition system $T$ in Fig. \ref{Drawing1}, which represents how an agent can get to a pharmacy to buy medicine for his user. State $s_0$ is the initial state, representing staying at home, and proposition $p$, representing arriving at a pharmacy, holds in state $s_4$. The agent can perform actions $\alpha_1$ to $\alpha_6$ in order to get to state $s_4$. From this transition system, the following formulas hold:
\[ T, s_0 \models \Box_{\alpha_1} \Box_{\alpha_6} p,\]
\[ T, s_0 \models \Box_{\alpha_2} \Box_{\alpha_3} p,\]
\[ T, s_0 \models \Box_{\alpha_2} \Box_{\alpha_4} \Box_{\alpha_5} p,\]
which means that the agent can first perform action $\alpha_1$ and then action $\alpha_6$, or action $\alpha_2$ followed by action $\alpha_4$, or action $\alpha_2$ followed by actions $\alpha_4$ and $\alpha_5$, to get to the pharmacy.
\end{example}

It is important for an agent not only to achieve his goal, but also to think about how to achieve his goal. As we can see from the running example, there are multiple ways for the agent to get to the pharmacy, and the agent needs to evaluate which one is the best to choose. In this paper, agents are able to perform value-based practical reasoning in terms of planning their actions to achieve their goals. We first assume that an agent has a set of values. A value can be seen as an abstract standard according to which agents have their preferences over options. For instance, if we have a value denoting \emph{equality}, we prefer the options where equal sharing or equal rewarding hold. Unlike \cite{luo2019formal} where a value is interpreted as a state formula, we simply assume a value as a primitive structure without considering how it is defined. We assume that agents can always compare any two values, so we define an agent's value system as a total pre-order (instead of a strict total order) over a set of values, representing the degree of importance of something. 
\begin{definition}[Value System]
A value system $V = (\operatorname{Val},\precsim)$ is a tuple consisting of a finite set of values $\operatorname{Val} = \{v, ..., v'\}$ together with a total pre-ordering $\precsim$ over $\operatorname{Val}$. When $v \precsim v'$, we say that value $v'$ is at least as important as value $v$. As is standard, we define $v \sim v^\prime$ to mean $v \precsim v^\prime$ and $v^\prime \precsim v$, and $v \prec v^\prime$ to mean $v \precsim v^\prime$ and $v \not \sim v^\prime$.
\end{definition}
We label some of the transitions with the values promoted and demoted by moving from a starting state to a ending state. Notice that not every transition can be labeled, as some transitions may not be relevant to any value in an agent's value system. Formally, function $\delta: \{+, -\} \times \operatorname{Val} \to 2^{\to}$ is a valuation function which defines the status (promoted (+) or demoted (-)) of a value $v \in \operatorname{Val}$ ascribed to a set of transitions. We then define a value-based transition system $VT$ as a transition system together with a value system $V$ and a function $\delta$.
\begin{definition}[Value-based Transition Systems]
A value-based transition system is defined by a triple $VT = (T, V, \delta)$, where $T$ is a transition system, $V$ is a value system and $\delta$ is a valuation function that assigns value promotion or demotion to a set of transitions.
\end{definition}
Given a sequence of actions with respect to a value-based transition system, we then express whether the performance of the sequence in a state promotes or demotes a specific value, which can be done by extending our language. Given a pointed value-based transition system $(VT, s)$ and a value $v \in \operatorname{Val}$, the satisfaction relation $VT, s \models \psi$ is extended with the following new semantics:
\begin{itemize}
\item $VT, s \models_{+v} \Box\alpha_1, \ldots, \Box\alpha_n \varphi$ iff $s[\alpha_1, \ldots, \alpha_n] \models \varphi$ and there exists $1 \leq m \leq n$ such that $(s[\alpha_1, \ldots, \alpha_{m-1}], \alpha_m, s[\alpha_1, \ldots, \alpha_{m}]) \in \delta(+,v)$; 
\item $VT, s \models_{-v} \Box\alpha_1, \ldots, \Box\alpha_n \varphi$ iff $s[\alpha_1, \ldots, \alpha_n] \models \varphi$ and there exists $1 \leq m \leq n$ such that $(s[\alpha_1, \ldots, \alpha_{m-1}], \alpha_m, s[\alpha_1, \ldots, \alpha_{m}]) \in \delta(-,v)$.
\end{itemize}
The formula $VT, s \models_{+v} \Box\alpha_1, \ldots, \Box\alpha_n \varphi$ (resp. $VT, s \models_{-v} \Box\alpha_1, \ldots, \Box\alpha_n \varphi$) should be intuitively read as $\varphi$ is achieved after the performance of a sequence of actions $\alpha_1, \ldots, \alpha_n$ in state $s$ and there exists an action that promotes (resp. demotes) value $v$ in the sequence. Notice that the formula only expresses the local property of a sequence of actions in terms of value promotion or demotion by an action within the sequence. Thus, it is possible that an action within the sequence promotes value $v$ but it gets demoted by another action within the sequence, meaning that both $VT, s \models_{+v} \Box\alpha_1, \ldots, \Box\alpha_n \varphi$ and $VT, s \models_{-v} \Box\alpha_1, \ldots, \Box\alpha_n \varphi$ hold at the same time. Through checking the above formulas, the agent is then aware of whether he can perform the sequence of actions to achieve his goal and which value gets promoted or demoted along the sequence. We continue our running example to illustrate how to use our logical language to express and verify properties of sequences of actions.

\begin{example}\label{list}
\begin{figure}
  \centering
    \includegraphics[width=0.35\textwidth]{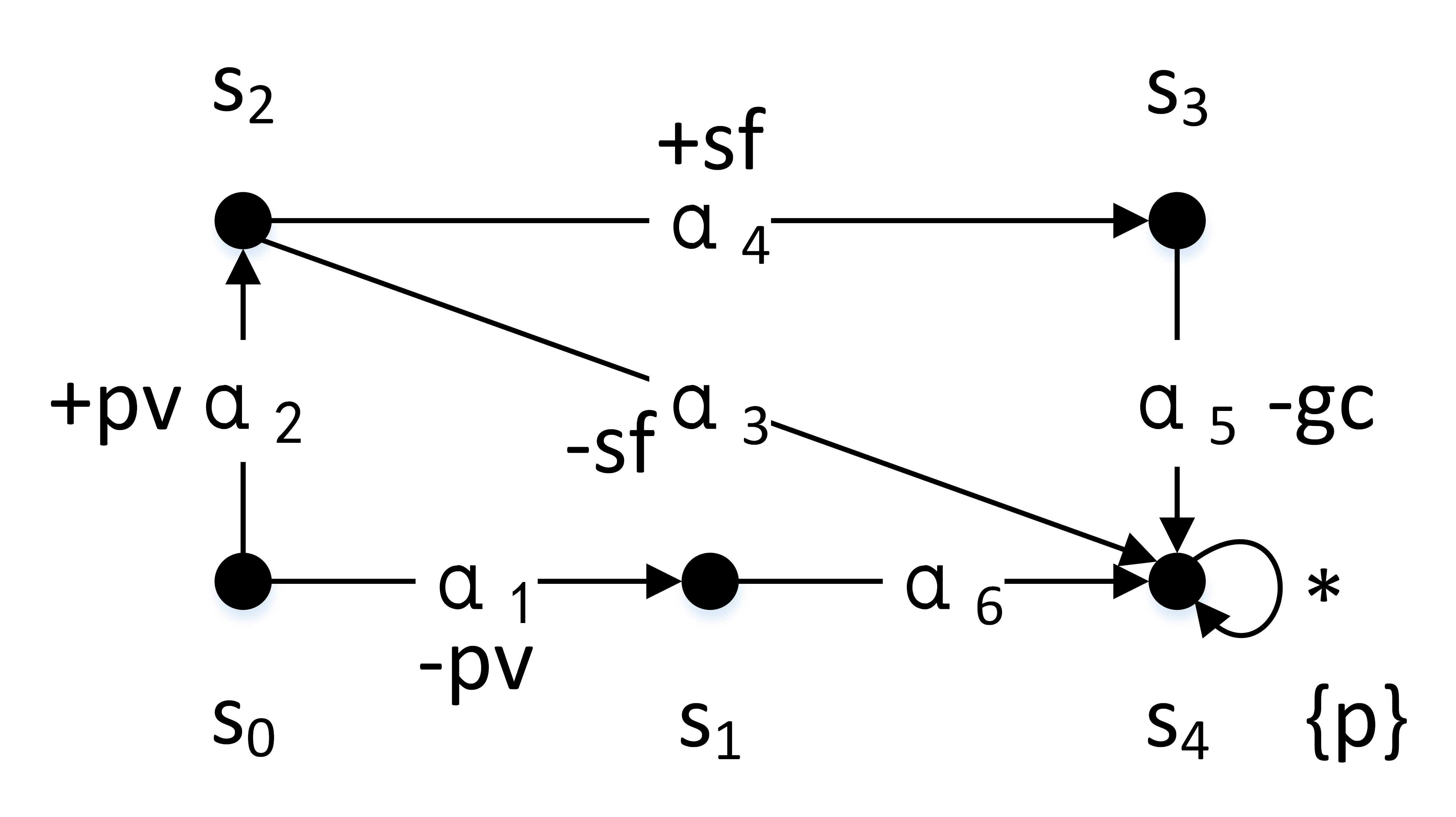}
    \caption{A Value-based transition system $VT$.}\label{Drawing3}
\end{figure}

Suppose the ethical agent has privacy ($pv$), safety ($sf$) and good conditions ($gc$) as his values and a value system as $pv \prec gc \prec sf$. As in Figure \ref{Drawing3}, some of the transitions have been labeled with value promotion or demotion with respect to the agent's values. Taking action $\alpha_1$ in state $s_0$ is interpreted as going through the neighbor's garden for taking shortcut, which demotes the value of privacy of the neighbor, conversely taking action $\alpha_2$ in state $s_0$ is interpreted as stepping on a normal way, which promotes the value of privacy of the neighbor. Taking action $\alpha_3$ means crossing the road without using the crosswalk, which demotes the value of safety of the agent, and conversely taking action $\alpha_4$ in state $s_2$ promotes the value of safety of the agent. Finally, performing action $\alpha_5$ in state $s_3$ means stepping into water. As the agent is a robot, which should avoid getting wet, this choice will demote the value of maintaining good conditions of the agent. The agent can verify whether he can achieve his goal while promoting or demoting a specific value by performing a sequence of actions. The verification results are listed below:
\begin{align*}
&T, s_0 \models_{-pv} \Box_{\alpha_1} \Box_{\alpha_6} p \\
&T, s_0 \models_{+pv} \Box_{\alpha_2} \Box_{\alpha_3} p \\
&T, s_0 \models_{-sf} \Box_{\alpha_2} \Box_{\alpha_3} p \\
&T, s_0 \models_{+pv} \Box_{\alpha_2} \Box_{\alpha_4} \Box_{\alpha_5} p \\
&T, s_0 \models_{+sf} \Box_{\alpha_2} \Box_{\alpha_4} \Box_{\alpha_5} p \\
&T, s_0 \models_{-gc} \Box_{\alpha_2} \Box_{\alpha_4} \Box_{\alpha_5} p 
\end{align*}
\end{example}

\section{Decision-making: an Argumentation-based Approach}
Given a transition system and an agent's goal, model checking and verification techniques allow us to verify whether an agent can achieve his goal while promoting or demoting a specific value by performing a sequence of actions. Since following different plans might promote or demote different sets of values, next question is how the agent decides what to do given the verification results. In this paper, we propose to use argumentation as a technique for an agent's decision-making. Formal argumentation is a nonmonotonic formalism for representing and reasoning about conflicts based on the construction and the evaluation of interacting arguments \cite{dung1995acceptability}. In particular, it has been used in practical reasoning, which is concerned by reasoning about what agents should do, given different alternatives and outcomes they bring about \cite{bench2012using}\cite{amgoud2007formalizing}. We first define the notion of plans. A plan is defined as a finite sequence of actions that are enabled by our underlying transition system. Formally,
\begin{definition}[Plans]
Given a pointed value-based transition system $(VT,s)$ and a formula $p \in \mathcal{L}_{prop}$ as an agent's goal, a plan is defined as a finite sequence of actions over $Act$, denoted as $\lambda = (\alpha_1, \alpha_2, \ldots, \alpha_n)$, such that $VT, s \models \Box \alpha_1 \Box \alpha_2 \ldots \Box \alpha_n p$. 
\end{definition}
The definition is equivalent to saying that a plan is a sequence of actions, each of which can be performed succinctly with respect to the pointed value-based transition system, to achieve the agent's goal. The agent has to reason about the available plans with respect to their goal achievement and value promotion or demotion. In order to do that, it is intuitive to define an argument as a plan together with its local property of value promotion or demotion. Based on the verification results, we define two types of arguments.
\begin{definition}[Ordinary Arguments and Blocking Arguments]\label{pargument}
Given a pointed value-based transition system $(VT,s)$, a formula $p \in \mathcal{L}_{prop}$ as an agent's goal, a plan $\lambda = (\alpha_1, \alpha_2, \ldots, \alpha_n)$ and $v \in \operatorname{Val}$, an ordinary argument is a pair $\langle +v, \lambda \rangle$ such that $VT, s \models_{+v} \Box_{\alpha_1} \Box_{\alpha_2} \ldots \Box_{\alpha_n} p$; a blocking argument is a pair $\langle -v, \lnot \lambda \rangle$ such that $VT, s \models_{-v} \Box_{\alpha_1} \Box_{\alpha_2} \ldots \Box_{\alpha_n} p$; we use $\mathcal{A}_o$ (resp. $\mathcal{A}_b$) to denote the set of ordinary arguments (resp. blocking arguments), and $\mathcal{A}= \mathcal{A}_o \cup \mathcal{A}_b$ to denote the set of two types of arguments.
\end{definition}
Both an ordinary argument and a blocking argument correspond to a verification result. An ordinary argument $\langle +v, \lambda \rangle$ is interpreted as ``the agent should follow plan $\lambda$ to achieve his goal because it promotes a value $v$'', which supports the performance of plan $\lambda$, and a blocking argument $\langle -v, \lnot \lambda \rangle$ is interpreted as ``the agent should not follow plan $\lambda$ to achieve his goal because it demotes a value $v$'', which objects to the performance of plan $\lambda$. Conventionally, we might represent an argument using an alphabet ($a, b, \ldots$) if we do not care about the internal structure of the argument. 
\begin{example}
From the verification results listed in Example \ref{list}, the agent can construct the following ordinary arguments and blocking arguments: $\langle -pv, \lnot (\alpha_1, \alpha_6) \rangle$, $\langle +pv, (\alpha_2, \alpha_3) \rangle$, $\langle -sf, \lnot (\alpha_2, \alpha_3) \rangle$, $\langle +pv, (\alpha_2, \alpha_4, \alpha_5) \rangle$, $\langle +sf, (\alpha_2, \alpha_4, \alpha_5) \rangle$ and $\langle -gc, \lnot (\alpha_2, \alpha_4, \alpha_5) \rangle$.
\end{example}

When we get to choose a plan to follow, there are conflicts between the alternatives as they cannot be followed all at the same time. The conflicts are interpreted as attacks between two ordinary arguments supporting different plans and one ordinary argument and one blocking argument supporting and objecting to the same plan respectively in this paper.
\begin{definition}[Attacks]\label{pattack}
Given a set of ordinary arguments $\mathcal{A}_o$ and a set of blocking arguments $\mathcal{A}_b$, 
\begin{itemize}
\item for any two ordinary arguments $\langle +v_a, \lambda_a \rangle, \langle +v_b, \lambda_b \rangle \in \mathcal{A}_o$, $\langle +v_a, \lambda_a \rangle$ attacks $\langle +v_b, \lambda_b \rangle$ iff $\lambda_a \not = \lambda_b$;
\item for any ordinary argument $\langle +v_a, \lambda_a \rangle \in \mathcal{A}_o$ and any blocking argument $\langle -v_b, \lnot \lambda_b \rangle \in \mathcal{A}_b$, 
\begin{itemize}
\item $\langle +v_a, \lambda_a \rangle$ attacks $\langle -v_b, \lnot \lambda_b \rangle$ iff $\lambda_a = \lambda_b$;
\item $\langle -v_b, \lnot \lambda_b \rangle$ attacks $\langle +v_a, \lambda_a \rangle$ iff $\lambda_a = \lambda_b$.
\end{itemize}
\end{itemize}
The set of attacks over $\mathcal{A}$ are denoted as $\mathcal{R}$.
\end{definition}
It is obvious that our attack relation is mutual. It should be noticed that there is no attack between two blocking arguments, as a blocking argument only functions as blocking the conclusion of an ordinary argument but does not make a conclusion by itself. 
\begin{example}
The running example has three ordinary arguments and three blocking arguments. The attack relation is depicted in Figure \ref{Drawing2}, where any two ordinary arguments with different plans are mutually attacked, and any ordinary argument and blocking argument with the same plan are mutually attacked.

\begin{figure}
  \centering
    \includegraphics[width=0.5\textwidth]{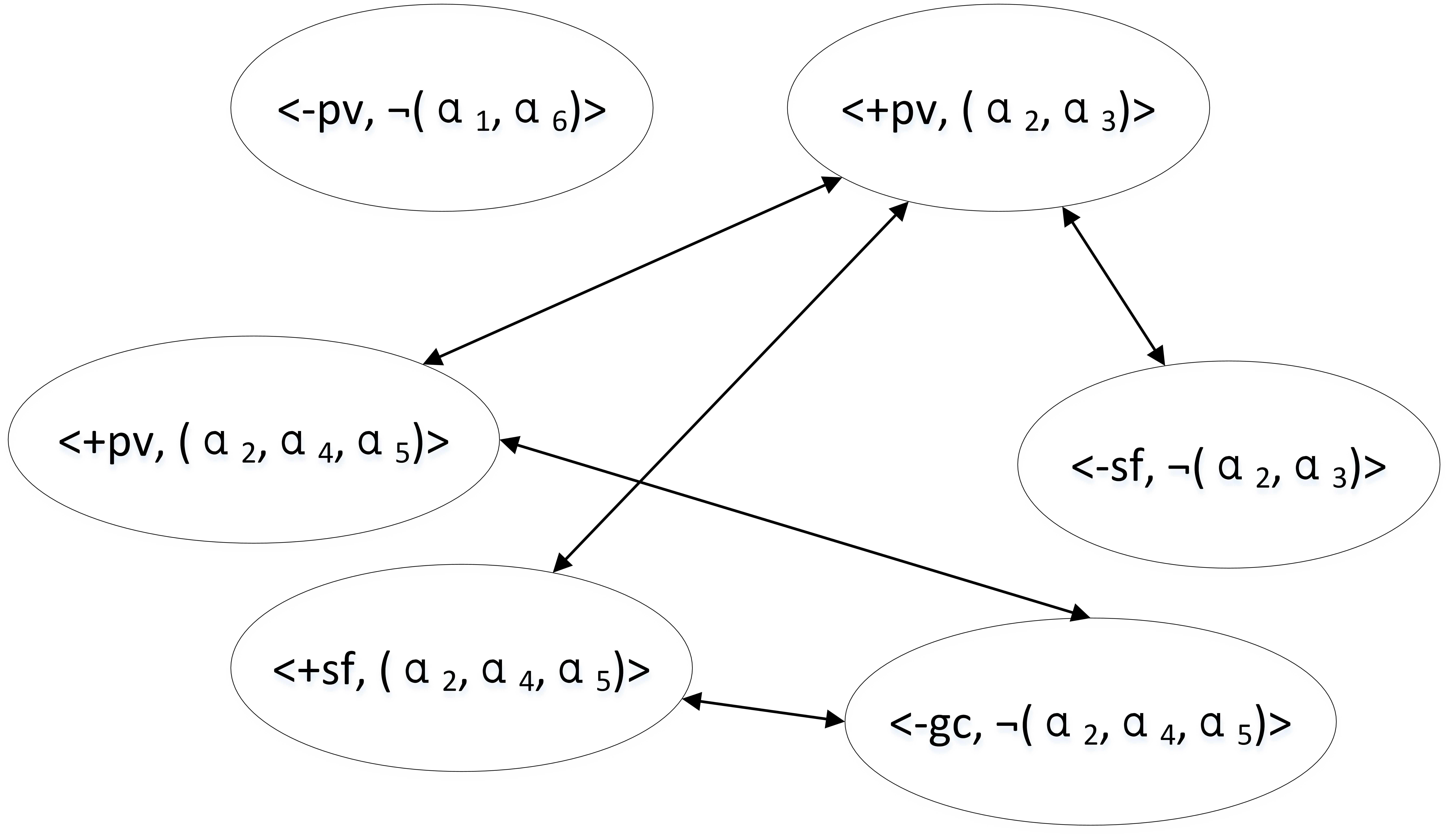}
    \caption{Attack relation between ordinary arguments and blocking arguments.}\label{Drawing2}
\end{figure}

\end{example}
The attack relation represents conflicts between plans. However, the notion of attack may not be sufficient for modeling conflicts between arguments, as an agent has his preference over the values that are promoted or demoted by different plans. In structured argumentation frameworks such as $ASPIC^+$ \cite{modgil2013general}, an argument $a$ can be used as a counter-argument to another argument $b$, if $a$ successfully attacks, i.e. defeats, $b$. Whether an attack from $a$ to $b$ (on its sub-argument $b^\prime$) succeeds as a defeat, may depend on the relative strengths of $a$ and $b$, i.e. whether $a$ is strictly stronger than, or strictly preferred over $b^\prime$. For this paper, recall that an agent has a value system, which was defined as a total pre-order over a set of values. We can then determine the preference over two arguments with respect to value promotion and demotion based on the value system. The notion of defeats combines the notions of attack and preference.
\begin{definition}[Defeats]\label{pdefeat}
Given a set of ordinary arguments $\mathcal{A}_o$ and a set of blocking arguments $\mathcal{A}_b$, a set of attacks $\mathcal{R}$ over $\mathcal{A}$ and a value system $V$, 
\begin{itemize}
\item for any two ordinary arguments $\langle +v_a, \lambda_a \rangle, \langle +v_b, \lambda_b \rangle \in \mathcal{A}_o$, $\langle +v_a, \lambda_a \rangle$ defeats $\langle +v_b, \lambda_b \rangle$ iff $\langle +v_a, \lambda_a \rangle$ attacks $\langle +v_b, \lambda_b \rangle$ and $v_a \not \prec v_b$;
\item for any ordinary argument $\langle +v_a, \lambda_a \rangle \in \mathcal{A}_o$ and any blocking argument $\langle -v_b, \lnot \lambda_b \rangle \in \mathcal{A}_b$, 
\begin{itemize}
\item $\langle +v_a, \lambda_a \rangle$ defeats $\langle -v_b, \lnot \lambda_b \rangle$ iff $\langle +v_a, \lambda_a \rangle$ attacks $\langle -v_b, \lnot \lambda_b \rangle$ and $v_a \not \prec v_b$;
\item $\langle -v_b, \lnot \lambda_b \rangle$ defeats $\langle +v_a, \lambda_a \rangle$ iff $\langle -v_b, \lnot \lambda_b \rangle$ attacks $\langle +v_a, \lambda_a \rangle$ and $v_b \not \prec v_a$.
\end{itemize}
\end{itemize}
The set of defeats over $\mathcal{A}$ based on an attack relation and a value system are denoted as $\mathcal{D}(\mathcal{R}, V)$. We write $\mathcal{D}$ for short if it is clear from the context.
\end{definition}
In words, given mutual attacks between two arguments, the attack from the argument with less preferred value to the attack from the argument with a more preferred value does not succeed as a defeat. One might ask whether it is more convenient to combine the notions of attack relation and defeat relation. We argue that two notions represent the relation between two arguments from different perspectives, one for the conflicts between plans and the other for the preferences over values. Because of that, defining these two notions separately can make our framework more clear, even though technically it is possible to combine them. It is obvious to see that our defeat relation can form a \emph{two-length cycle} in which two arguments have equivalent or the same values. 
\begin{proposition}
Given two ordinary arguments $\langle +v_a, \lambda_a \rangle, \langle +v_b, \lambda_b \rangle \in \mathcal{A}_o$, $\langle +v_a, \lambda_a \rangle$ and $\langle +v_b, \lambda_b \rangle$ form a two-length cycle iff $\lambda_a \not= \lambda_b$ and ($v_a =  v_b$ or $v_a \sim v_b$). Given an ordinary argument $\langle +v_a, \lambda_a \rangle \in \mathcal{A}_o$ and a blocking argument $\langle -v_b, \lnot \lambda_b \rangle \in \mathcal{A}_b$, $\langle +v_a, \lambda_a \rangle$ and $\langle -v_b, \lnot \lambda_b \rangle$ form a two-length cycle iff  $\lambda_a = \lambda_b$ and ($v_a =  v_b$ or $v_a \sim v_b$).
\end{proposition}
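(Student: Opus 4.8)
The plan is to handle each of the two biconditionals by unfolding the definition of a \emph{two-length cycle} — that is, mutual defeat — into the definitions of defeat (Definition~\ref{pdefeat}) and attack (Definition~\ref{pattack}), and then to reduce the resulting order-theoretic condition on $v_a$ and $v_b$ using the totality of $\precsim$. Both cases share the same core, so I would isolate that core as a small auxiliary observation and reuse it for the second case.

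For two ordinary arguments, I would write ``$\langle +v_a, \lambda_a \rangle$ and $\langle +v_b, \lambda_b \rangle$ form a two-length cycle'' as the conjunction ``$\langle +v_a, \lambda_a \rangle$ defeats $\langle +v_b, \lambda_b \rangle$ and $\langle +v_b, \lambda_b \rangle$ defeats $\langle +v_a, \lambda_a \rangle$''. By Definition~\ref{pdefeat} the first conjunct means ``attacks and $v_a \not\prec v_b$'' and the second means ``attacks and $v_b \not\prec v_a$''. By Definition~\ref{pattack} the attack between two ordinary arguments is symmetric and depends only on the plans, so both attacks hold exactly when $\lambda_a \neq \lambda_b$. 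Hence the whole condition is equivalent to $\lambda_a \neq \lambda_b \wedge v_a \not\prec v_b \wedge v_b \not\prec v_a$.

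The key step is the auxiliary claim that $v_a \not\prec v_b \wedge v_b \not\prec v_a$ holds iff $v_a \sim v_b$. I would prove this straight from the definitions $v \prec v' \iff (v \precsim v' \wedge v \not\sim v')$ and $v \sim v' \iff (v \precsim v' \wedge v' \precsim v)$ together with totality of $\precsim$: by totality at least one of $v_a \precsim v_b$, $v_b \precsim v_a$ holds; if exactly one holds we obtain a strict comparison ($v_a \prec v_b$ or $v_b \prec v_a$) contradicting one of the two negated-strict hypotheses, so both must hold, which is exactly $v_a \sim v_b$; the converse is immediate. I would then remark that $v_a = v_b$ implies $v_a \sim v_b$ by reflexivity of $\precsim$, so ``$v_a = v_b$ or $v_a \sim v_b$'' and ``$v_a \sim v_b$'' are interchangeable, matching the disjunction in the statement. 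This completes the first biconditional.

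The second case (one ordinary, one blocking argument) is structurally identical: mutual defeat unfolds, via the ordinary-versus-blocking clauses of Definition~\ref{pdefeat}, into ``attacks and $v_a \not\prec v_b$'' together with ``attacks and $v_b \not\prec v_a$'', and by Definition~\ref{pattack} both attacks hold exactly when $\lambda_a = \lambda_b$. Reusing the auxiliary claim yields $\lambda_a = \lambda_b \wedge (v_a = v_b \text{ or } v_a \sim v_b)$, as required. The only thing to watch is the bookkeeping: the attack condition flips between the subcases ($\lambda_a = \lambda_b$ here versus $\lambda_a \neq \lambda_b$ above), and the two strictness conditions $v_a \not\prec v_b$ and $v_b \not\prec v_a$ come from the two directions of defeat and must not be conflated. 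No genuine difficulty arises beyond this, so the main (and only modest) obstacle is getting the order-theoretic equivalence of the key step clean.
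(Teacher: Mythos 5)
Your proof is correct and takes essentially the same approach as the paper, whose entire proof is the one-line remark that the result follows directly from the definition of defeats; you simply make explicit the definitional unfolding (mutual defeat as mutual attack plus the two non-strict-preference conditions) and the order-theoretic step that $v_a \not\prec v_b$ and $v_b \not\prec v_a$ together are equivalent to $v_a \sim v_b$ under totality, both of which the paper leaves implicit.
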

\begin{proof}
Proof follows from Definition \ref{pdefeat} directly.
\end{proof}
However, we have the result that our defeat relation obeys the property of irreflexivity and never forms any odd cycle.
\begin{proposition}\label{odd}
Given a set of arguments $\mathcal{A}$, a defeat relation $\mathcal{D}$ on $\mathcal{A}$ never forms any odd cycle.
\end{proposition}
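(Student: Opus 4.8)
The plan is to extract an order-theoretic invariant from the defeat relation, use it to confine any cycle to a single value class, and then settle parity by a type-based colouring argument. First I would record the elementary fact that, because $\precsim$ is a total pre-order, $v_a \not\prec v_b$ is equivalent to $v_b \precsim v_a$. Reading the three clauses of Definition \ref{pdefeat} through this equivalence yields a single uniform \emph{monotonicity lemma}: whenever an argument $X$ carrying value $v_X$ defeats an argument $Y$ carrying value $v_Y$, we have $v_Y \precsim v_X$. This holds case by case — for an ordinary-ordinary defeat the side condition is exactly $v_b \precsim v_a$; for an ordinary argument defeating a blocking one it is again $v_b \precsim v_a$; and for a blocking argument $\langle -v_b, \lnot \lambda_b \rangle$ defeating an ordinary $\langle +v_a, \lambda_a \rangle$ the condition $v_b \not\prec v_a$ reads $v_a \precsim v_b$, so the defeater's value again dominates.

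Next I would apply this lemma around a putative cycle $a_1 \to a_2 \to \cdots \to a_k \to a_1$. Each edge $a_i \to a_{i+1}$ gives $v_{a_{i+1}} \precsim v_{a_i}$, and the closing edge gives $v_{a_1} \precsim v_{a_k}$, so chaining produces $v_{a_1} \precsim v_{a_k} \precsim \cdots \precsim v_{a_2} \precsim v_{a_1}$; by the definition of $\sim$ all arguments on the cycle then carry $\sim$-equivalent values. This is the decisive reduction: a cycle can never climb strictly in the value order, so every cycle lives inside a single $\sim$-equivalence class — exactly the regime in which the preceding proposition already exhibits two-length cycles. Irreflexivity is disposed of at the outset, since no clause of Definition \ref{pdefeat} can relate an argument to itself (an ordinary self-defeat would require $\lambda \neq \lambda$, and blocking arguments never defeat blocking arguments).

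It then remains to settle parity within one equivalence class, and this is where I expect the real work to lie. Restricted to a fixed class, the type structure of Definition \ref{pattack} constrains the edges sharply: there is never an edge between two blocking arguments, every ordinary-blocking or blocking-ordinary defeat forces the two plans to coincide, and every ordinary-ordinary defeat forces the two plans to differ. I would aim to exhibit a $2$-colouring of the arguments on the cycle witnessing bipartiteness, the natural candidate separating an ordinary argument from the blocking argument attacking the same plan, and then read off even length. The main obstacle is controlling the ordinary-ordinary defeats: a blocking argument pins both of its cyclic neighbours to its own plan, whereas consecutive ordinary arguments must carry distinct plans, and one must show these two requirements cannot be reconciled around a cycle of odd length. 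Managing this interplay between plan-equality (on the blocking edges) and plan-inequality (on the ordinary edges) is the crux, which I would aim to discharge by a careful alternation argument on the sequence of plans read around the cycle.
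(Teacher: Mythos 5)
Your first two steps --- the monotonicity lemma ($X$ defeats $Y$ implies $v_Y \precsim v_X$) and the chaining argument confining every cycle to a single $\sim$-equivalence class --- are exactly the content of the paper's own proof, which reads the dominance condition off Definition~\ref{pdefeat} and uses totality of $\precsim$ to conclude that any cycle consists of mutually defeating arguments carrying equivalent values. The difference is that the paper stops there and simply asserts the absence of odd cycles, whereas you correctly observe that this reduction does not by itself settle parity, and you defer the remaining step to a ``careful alternation argument'' that you never carry out. As written, your proposal is therefore not a proof: the step you yourself identify as the crux is missing.

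Moreover, the completion you sketch cannot succeed. Take three ordinary arguments $\langle +v, \lambda_1\rangle$, $\langle +v, \lambda_2\rangle$, $\langle +v, \lambda_3\rangle$ with pairwise distinct plans and the same value $v$. By Definition~\ref{pattack} each pair attacks each other, and since $v \not\prec v$ every such attack succeeds as a defeat, so $\mathcal{D}$ restricted to these three arguments is the complete symmetric digraph on three vertices. That graph contains the directed cycle $a_1 \to a_2 \to a_3 \to a_1$ of length three and is not bipartite, so no $2$-colouring or plan-alternation argument can establish that all cycles are even; the only cycles your reduction genuinely excludes are those that would have to climb strictly in the value order. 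This configuration is equally a problem for the paper's one-line conclusion, which is worth flagging: the claim survives only under a reading of ``odd cycle'' that discounts cycles all of whose edges are symmetric (the reading actually needed for the subsequent coincidence of preferred and stable extensions), and neither the paper nor your proposal makes that reading, or the argument for it, explicit.
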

\begin{proof}
According to Definition \ref{pdefeat}, in order for an argument $a$ to defeat another argument $b$, the value $v_a$ that belongs to $a$ must be not less preferred than $v_b$ that belongs to $b$. Since an agent's value system is a total pre-order over a set of values, arguments can only form a cycle in which any two arguments are mutually defeated with the values involved are equivalent or the same. So $\mathcal{D}$ on $\mathcal{A}$ never forms any odd cycle.
\end{proof}
\begin{proposition}
Given a set of arguments $\mathcal{A}$, a defeat relation $\mathcal{D}$ on $\mathcal{A}$ is irreflexive.
\end{proposition}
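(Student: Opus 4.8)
The plan is to reduce the irreflexivity of $\mathcal{D}$ to the irreflexivity of the underlying attack relation $\mathcal{R}$. Inspecting Definition~\ref{pdefeat}, every clause that places a pair into $\mathcal{D}$ carries the corresponding attack as an explicit precondition: an argument $a$ can only defeat $b$ if $a$ already attacks $b$. Consequently $\mathcal{D} \subseteq \mathcal{R}$ as sets of pairs, so it suffices to show that no argument attacks itself, i.e.\ that $\mathcal{R}$ is irreflexive; irreflexivity of $\mathcal{D}$ then follows immediately.

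First I would record that the two syntactic forms of arguments are disjoint: an ordinary argument is a pair $\langle +v, \lambda \rangle$ and a blocking argument is a pair $\langle -v, \lnot \lambda \rangle$, so any fixed $a \in \mathcal{A}$ is either ordinary or blocking but never both. I then case-split on the type of $a$ and check each clause of Definition~\ref{pattack} with both arguments instantiated to $a$. If $a = \langle +v, \lambda \rangle$ is ordinary, the only clause that could apply to the pair $(a,a)$ is the ordinary--ordinary one, which requires $\lambda \neq \lambda$; this is impossible, so $a$ does not attack itself. The ordinary--blocking clauses cannot apply to $(a,a)$, since they relate an ordinary argument to a \emph{blocking} one and $a$ cannot be of both types at once.

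If instead $a = \langle -v, \lnot \lambda \rangle$ is blocking, the only clauses mentioning a blocking argument are again the ordinary--blocking ones, which pair it with an ordinary argument; as $a$ is not ordinary, none of these can be instantiated with $(a,a)$. In particular there is no attack between two blocking arguments at all, as already observed after Definition~\ref{pattack}. Hence in every case $a$ fails to attack itself, $\mathcal{R}$ is irreflexive, and therefore so is $\mathcal{D}$.

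I do not expect a genuine obstacle here: the argument is essentially a bookkeeping exercise over the clauses of Definitions~\ref{pattack} and~\ref{pdefeat}. The only points requiring care are the reduction step — making explicit that attack is a necessary precondition for defeat, so that irreflexivity transfers from $\mathcal{R}$ to $\mathcal{D}$ — together with the type-disjointness observation that rules out the cross-type clauses when the two arguments coincide.
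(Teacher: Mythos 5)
Your proof is correct, but it takes a genuinely different route from the paper. The paper obtains irreflexivity as a corollary of Proposition~\ref{odd}: a self-defeat is an odd cycle of length one, and that proposition asserts $\mathcal{D}$ has no odd cycles. You instead argue directly from the definitions: $\mathcal{D} \subseteq \mathcal{R}$ because every defeat clause has the corresponding attack as a precondition, and $\mathcal{R}$ is irreflexive because the ordinary--ordinary clause demands $\lambda_a \neq \lambda_b$ (impossible when the two arguments coincide) while the cross-type clauses require one ordinary and one blocking argument (impossible for a single argument, since the two syntactic forms are disjoint). Your version is more self-contained and, in a sense, identifies the real source of irreflexivity: the paper's argument for Proposition~\ref{odd} is driven by the value preferences, and a preference-based argument alone could never exclude a self-loop, since $v_a \not\prec v_a$ always holds; it is precisely the irreflexivity of the \emph{attack} relation that does the work, which is exactly the fact you isolate. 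The paper's route buys brevity by reusing an earlier proposition; yours buys independence from that proposition and makes the type-disjointness and same-plan observations explicit.
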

\begin{proof}
It is a special case of Proposition \ref{odd} for the number of arguments in the odd cycle being one.
\end{proof}
We are now ready to construct a Dung-style abstract argumentation framework with ordinary arguments, blocking arguments and the defeat relation on them.
\begin{definition}[Plan-based Argumentation Frameworks]
Given a pointed value-based transition system $(VT,s)$ and a formula $p \in \mathcal{L}_{prop}$ as an agent's goal, a plan-based argumentation framework over $(VT,s)$ and $p$ is a pair $PAF = (\mathcal{A}, \mathcal{D})$, where $\mathcal{A}$ is a set of arguments and $\mathcal{D}$ is a defeat relation on $\mathcal{A}$.
\end{definition}

\begin{example}
In our running example, the agent has a value system as $pv \prec gc \prec sf$, which means that safety is more important than keeping good condition, and keeping good condition is more important than privacy. We then can see some of the attacks in Figure \ref{Drawing2} do not succeed as defeats. For example, argument $\langle +pv, (\alpha_2, \alpha_4, \alpha_5) \rangle$ and argument $\langle -gc, \lnot (\alpha_2, \alpha_4, \alpha_5) \rangle$ are mutually attacked, but since $pv$ is less preferred than $gc$, only the attack from argument $\langle -gc, \lnot (\alpha_2, \alpha_4, \alpha_5) \rangle$ to argument $\langle +pv, (\alpha_2, \alpha_4, \alpha_5) \rangle$ becomes a defeat. For arguments $\langle +pv, (\alpha_2, \alpha_4, \alpha_5) \rangle$ and $\langle +pv, (\alpha_2, \alpha_3) \rangle$, since $pv = pv$, the mutual attacks between them succeed as mutual defeats. For the space limitation, we do not analyze all the defeats, which is depicted in Figure \ref{Drawing4}. Interestingly, argument $\langle -pv, \lnot (\alpha_1, \alpha_6)$ does not receive any defeats or defeat any arguments not because $pv$ is the most preferred value, but because there is no ordinary argument with plan $(\alpha_1, \alpha_6)$.

\begin{figure}
  \centering
    \includegraphics[width=0.5\textwidth]{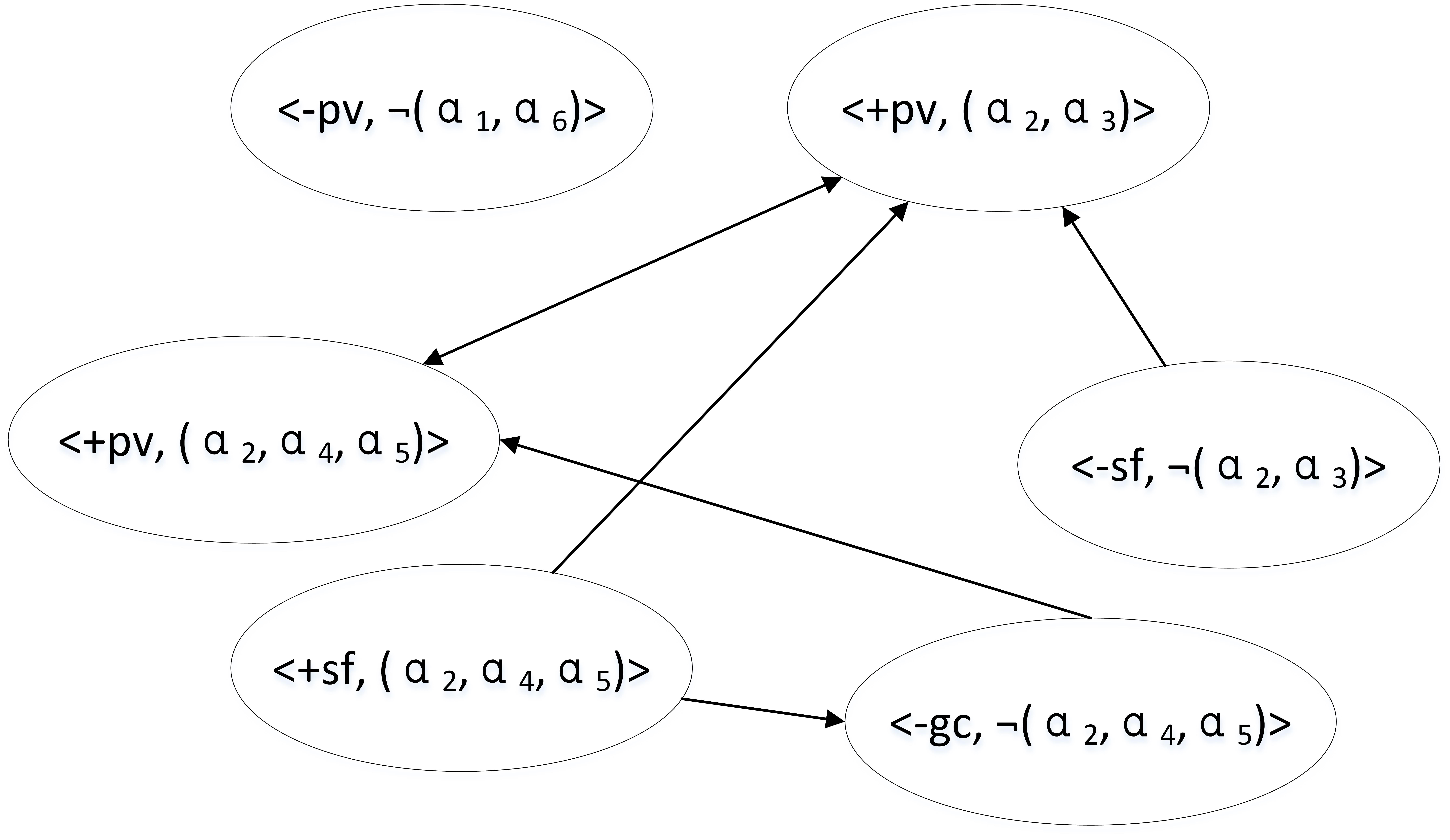}
    \caption{An argumentation framework.}\label{Drawing4}
\end{figure}

\end{example}

Given a plan-based argumentation framework $PAF$, status of arguments is evaluated, producing sets of arguments that are acceptable together, which are based on the notions of conflict-freeness, acceptability and admissibility. The well-known argumentation semantics are listed as follows, each of which provides a pre-defined criterion for determining the acceptability of arguments in a $PAF$ \cite{dung1995acceptability}.
\begin{definition}[Conflict-freeness, Acceptability, Admissibility and Extensions]\label{PAFsemantics}
Given $PAF=(\mathcal{A}, \mathcal{D})$ and $ E \subseteq \mathcal{A}$,
		\begin{itemize}
		\item A set $ E$ of arguments is \textit{conflict-free} iff there does not exist $a, b \in E$ such that $(a, b)\in \mathcal{D}$.
		\item An argument $a \in  \mathcal{A}$ is \textit{acceptable} w.r.t. a set $ E$ ($a$ is defended by $ E$), iff $\forall (b,a)\in  \mathcal{D}$,  $\exists c \in  E$ such that $(c,b)\in  \mathcal{D}$.
		\item A conflict-free set of arguments $ E$ is \textit{admissible} iff each argument in $E$ is acceptable w.r.t. $ E$.			
		\item $E$ is a complete extension of $PAF$ iff $E$ is admissible and each argument in $\mathcal{A}$ that is acceptable w.r.t. $E$ is in $E$.
		\item $E$ is the grounded extension of $PAF$ iff $E$ is the minimal (w.r.t. set inclusion) complete extension.
		\item $E$ is the preferred extension of $PAF$ iff $E$ is a maximal (w.r.t. set inclusion) complete extension.
		\item $E$ is a stable extension of $PAF$ iff $E$ is conflict-free and $\forall b \in \mathcal{A}\backslash E$, $\exists a \in E$ such that $(a,b)\in \mathcal{D}$.
		\end{itemize}
\end{definition}
We use $sem \in \{cmp, prf, grd, stb\}$ to denote the complete, preferred, grounded and stable semantics, respectively, and $\mathcal{E}_{sem}(PAF)$ to denote the set of extensions of $PAF$ under a semantics in $sem$. The following propositions characterize our argumentation framework in terms of Dung's semantics.
\begin{proposition}
Given $PAF=(\mathcal{A}, \mathcal{D})$, $\mathcal{E}_{prf}(PAF) = \mathcal{E}_{stb}(PAF)$.
\end{proposition}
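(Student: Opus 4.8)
The plan is to prove the two inclusions $\mathcal{E}_{stb}(PAF) \subseteq \mathcal{E}_{prf}(PAF)$ and $\mathcal{E}_{prf}(PAF) \subseteq \mathcal{E}_{stb}(PAF)$ separately. The first inclusion is a general fact of Dung's theory and requires nothing special about $PAF$: any stable extension $E$ is conflict-free and defeats every argument outside $E$, so it defends all of its members and is admissible; moreover no complete extension can properly contain it, since any additional argument would lie outside $E$ and hence already be defeated by $E$, contradicting conflict-freeness. Thus every stable extension is a maximal complete extension, i.e. preferred. I would state this first and dispose of it quickly.

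The substance is the reverse inclusion $\mathcal{E}_{prf} \subseteq \mathcal{E}_{stb}$, and here I would exploit Proposition \ref{odd}: the defeat relation $\mathcal{D}$ contains no odd cycle. The standard way to package this is the \emph{coherence} property --- a finite framework whose defeat graph has no odd-length cycle is coherent, meaning every preferred extension is stable --- so one route is simply to invoke this known result together with Proposition \ref{odd}. If a self-contained argument is preferred, I would proceed as follows. Let $E$ be preferred, hence complete, and suppose toward a contradiction that $E$ is not stable, so the set $U$ of arguments that are neither in $E$ nor defeated by $E$ is non-empty. Using completeness one first shows that every $x \in U$ has a defeater lying again in $U$: since $x \notin E$ it is not acceptable w.r.t. $E$, so some defeater $y$ of $x$ is not defeated by $E$, and $y$ cannot belong to $E$ (else $E$ would defeat $x$), whence $y \in U$. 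One also records that, by admissibility of $E$ and the very definition of $U$, no argument of $U$ defeats an argument of $E$ and no argument of $E$ defeats an argument of $U$.

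To finish, I would pass to the sub-framework induced on $U$, which is again finite and odd-cycle-free, and extract from it a non-empty set $S \subseteq U$ that is stable within $U$; then I would verify that $E \cup S$ is admissible in the whole framework. Conflict-freeness holds because $E$ and $U$ do not defeat each other, and every attacker $y$ of some $s \in S$ is defeated either by $E$ --- when $y$ lies outside $U$, since then $y \notin E$ forces $E$ to defeat $y$ --- or by $S$, when $y$ lies in $U$, by stability of $S$ inside $U$. This contradicts the maximality of the preferred extension $E$, forcing $U = \emptyset$ and hence $E$ stable; combined with the first inclusion this yields the claimed equality.

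The main obstacle is precisely the step that converts ``no odd cycle'' into coherence. The naive idea --- chasing defeaters backwards from an undefeated outside argument until the finite graph forces a repetition --- only produces \emph{some} cycle, and by the preceding proposition on two-length cycles such cycles are genuinely present and even, so odd-cycle-freeness alone cannot contradict a bare cycle. The real leverage comes from the stronger fact that within the undecided region $U$ the absence of odd cycles guarantees a non-empty set that is stable relative to $U$ and can be safely merged with $E$; isolating and justifying that set --- or, equivalently, citing Dung's coherence theorem for odd-cycle-free frameworks --- is where the actual work lies.
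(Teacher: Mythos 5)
Your proof is correct and ultimately rests on the same pivot as the paper's: Proposition \ref{odd} (no odd defeat cycles) makes the framework coherent in Dung's sense, so preferred extensions are stable; the paper simply cites this via the ``limited controversial'' condition of \cite{dung1995acceptability}, which is exactly the coherence theorem you identify as the irreducible core. Your additional scaffolding (the easy inclusion $\mathcal{E}_{stb}\subseteq\mathcal{E}_{prf}$ and the reduction to the undecided region $U$) is sound but does not change the route.
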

\begin{proof}
Since our defeat relation $\mathcal{D}$ never forms an odd cycle by Proposition \ref{odd}, which means that $PAF$ is limited controversial, each preferred extension of $PAF$ is stable. Detailed proof can be found in \cite{dung1995acceptability}.
\end{proof}

\begin{proposition}
Given $PAF=(\mathcal{A}, \mathcal{D})$, if there exists an ordinary argument $\langle +v_a, \lambda_a \rangle$ such that for all $\langle +v_b, \lambda_b \rangle \in \mathcal{A}_o$ and $\langle -v_b, \lnot \lambda_b \rangle \in \mathcal{A}_b$ it is the case that $v_b \precsim v_a$ and $\langle +v_a, \lambda_a \rangle$ is not in any cycle, then $\mathcal{E}_{prf}(PAF) = \mathcal{E}_{grd}(PAF)$.
\end{proposition}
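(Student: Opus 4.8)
The plan is to show that the two hypotheses together force $PAF$ to possess a \emph{unique} complete extension; since the grounded extension is the unique minimal complete extension and each preferred extension is a maximal complete extension, uniqueness of the complete extension immediately yields $\mathcal{E}_{grd}(PAF) = \mathcal{E}_{prf}(PAF)$. Write $a^\ast = \langle +v_a, \lambda_a \rangle$ for the distinguished argument. The whole argument pivots on first establishing that $a^\ast$ is undefeated, and then showing that its forced membership in every complete extension propagates to fix the status of every other argument.

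First I would prove that $a^\ast$ receives no defeat. Suppose some argument $b$ defeats $a^\ast$; then $b$ attacks $a^\ast$, and since the attack relation is mutual, $a^\ast$ attacks $b$ as well. By Definition \ref{pdefeat} the value $v_b$ carried by $b$ satisfies $v_b \not\prec v_a$, while the maximality hypothesis gives $v_b \precsim v_a$; in a total pre-order these two facts force $v_b \sim v_a$, and hence also $v_a \not\prec v_b$. The matching defeat clause of Definition \ref{pdefeat} then lets $a^\ast$ defeat $b$ in return, so $a^\ast$ lies on a two-length cycle, contradicting the hypothesis. Hence $a^\ast$ is undefeated, and therefore belongs to the grounded extension and to every complete extension.

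Next I would exploit maximality in the opposite direction. For every ordinary argument whose plan differs from $\lambda_a$ and every blocking argument whose plan equals $\lambda_a$, the attack clauses of Definition \ref{pattack} apply, and $v_a \not\prec v_b$ holds because $v_b \precsim v_a$; thus $a^\ast$ defeats all such arguments. Consequently, in any complete extension $E$ these arguments are excluded, being defeated by the member $a^\ast$ which, being undefeated, cannot be defended against. Conversely, every remaining argument---an ordinary argument sharing the plan $\lambda_a$, or a blocking argument with a plan other than $\lambda_a$---has all of its attackers inside this excluded set, and each such attacker is defeated by $a^\ast$; so every surviving argument is defended by $\{a^\ast\} \subseteq E$ and, by completeness, lies in $E$. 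This determines $E$ uniquely as the set of all same-plan ordinary arguments together with all other-plan blocking arguments.

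To finish I would check that this candidate set is conflict-free and complete, which is routine: two ordinary arguments with the same plan do not attack each other, blocking arguments never attack one another, and a blocking argument with a plan other than $\lambda_a$ is not attacked by an ordinary argument with plan $\lambda_a$, so no defeat arises inside the set. I expect the main obstacle to be the pre-order bookkeeping in the undefeatedness step---in particular the repeated passage from ``$v_b \precsim v_a$ together with $v_b \not\prec v_a$'' to ``$v_b \sim v_a$'', and the accompanying observation that an equivalent value upgrades a one-directional defeat into a mutual one---since everything afterwards follows smoothly once $a^\ast$ is identified as an undefeated argument that dominates the framework.
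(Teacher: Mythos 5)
Your proof is correct, but it takes a genuinely different route from the paper's. The paper also begins by observing that $\langle +v_a,\lambda_a\rangle$ is undefeated (hence in the grounded extension and in every preferred extension), but then finishes by a forward appeal to Proposition \ref{one}: if $\mathcal{E}_{prf}(PAF)\neq\mathcal{E}_{grd}(PAF)$ there would be several preferred extensions, each indicating a distinct plan, yet all of them would contain $\langle +v_a,\lambda_a\rangle$ and hence indicate $\lambda_a$ --- contradiction. You instead push the maximality of $v_a$ in both directions: not only is $a^\ast$ undefeated, it also \emph{defeats} every ordinary argument with a different plan and every blocking argument with plan $\lambda_a$, which forces every complete extension to exclude exactly those arguments and, by completeness, to include everything else; this pins down a unique complete extension, from which $\mathcal{E}_{grd}=\mathcal{E}_{prf}$ is immediate. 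Your version is more self-contained (no dependence on a result proved later in the paper), proves something strictly stronger (uniqueness of the complete extension, with an explicit description of it), and quietly avoids a delicate step in the paper's argument, namely the inference from ``preferred $\neq$ grounded'' to ``more than one preferred extension,'' which is not valid for arbitrary argumentation frameworks and is simply not needed in your approach. The paper's route, in exchange, is much shorter and reuses the structural lemma that every extension speaks about a single plan. Your pre-order bookkeeping (from $v_b\precsim v_a$ and $v_b\not\prec v_a$ to $v_b\sim v_a$, hence a mutual defeat and a two-length cycle) is exactly right and is the step the paper leaves implicit.
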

\begin{proof}
Because $v_b \precsim v_a$ and $\langle +v_a, \lambda_a \rangle$ is not in any cycle, argument $\langle +v_a, \lambda_a \rangle$ does not receive any defeats. So the grounded extension is not an empty set. Suppose $\mathcal{E}_{prf}(PAF) \not= \mathcal{E}_{grd}(PAF)$, which means that there are more than one preferred extensions. Since $\langle +v_a, \lambda_a \rangle$ is contained in the grounded extension, it should also be contained in each preferred extension. However, each preferred extension indicates a distinct plan, which will be later proved by Proposition \ref{one} and its implication. Contradiction!
\end{proof}

The justification of optimal plans is then defined under various semantics in Definition \ref{PAFsemantics}. Similarly to \cite{liao2019prioritized}, we write $\concl{\langle +v, \lambda \rangle}$ for the conclusion $\lambda$ of an ordinary argument, and $\oplans{PAF}{sem}$ for the set of conclusions of ordinary arguments from the extensions under a specific semantics.
\begin{definition}[Optimal Plans]\label{oplans}
Given $PAF=(\mathcal{A}, \mathcal{D})$, a set of optimal plans, written as $\oplans{PAF}{sem}$, are the conclusions of the ordinary arguments within extensions.
\begin{multline*}
\oplans{PAF}{sem}= \\
\{\concl{\langle +v, \lambda \rangle} \mid \langle +v, \lambda \rangle \in E \text{ and } E \in \mathcal{E}_{sem}(PAF)  \}
\end{multline*}
\end{definition}
We show that the results of our approach are consistent with the rationality of decision-making through the following propositions. Firstly, all the accepted arguments within an extension indicate the same plan.
\begin{proposition}\label{one}
Given a plan-based argumentation framework $PAF = (\mathcal{A}, \mathcal{D})$ and an extension $E$ of $PAF$ under a specific semantics as defined in Definition \ref{PAFsemantics}, 
\begin{enumerate}
\item for any two ordinary arguments $\langle +v_a, \lambda_a \rangle, \langle +v_b, \lambda_b \rangle \in E$, it is the case that $\lambda_a = \lambda_b$;
\item for any ordinary argument $\langle +v_a, \lambda_a \rangle \in E$ and any blocking argument $\langle -v_b, \lnot \lambda_b \rangle \in E$, $\lambda_a \not = \lambda_b$.
\end{enumerate}
\end{proposition}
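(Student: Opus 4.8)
The plan is to reduce both claims to the conflict-freeness that every one of the four semantics guarantees. By Definition \ref{PAFsemantics}, complete, grounded and preferred extensions are all admissible and hence conflict-free, while stable extensions are conflict-free by definition. So for each part it suffices to show that if the stated relation between plans failed, the two arguments would have to defeat each other in at least one direction, contradicting the conflict-freeness of $E$. The single fact driving everything is that the strict part $\prec$ of the value system is asymmetric: if both $v_a \prec v_b$ and $v_b \prec v_a$ held, then $v_a \precsim v_b$ and $v_b \precsim v_a$, giving $v_a \sim v_b$ and contradicting $v_a \prec v_b$. Consequently, for any two values at least one of $v_a \not\prec v_b$ or $v_b \not\prec v_a$ holds.

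For part 1 I would argue by contradiction. Assume $\langle +v_a, \lambda_a \rangle, \langle +v_b, \lambda_b \rangle \in E$ with $\lambda_a \neq \lambda_b$. By Definition \ref{pattack} these two ordinary arguments attack each other mutually. Since at least one of $v_a \not\prec v_b$ or $v_b \not\prec v_a$ holds, Definition \ref{pdefeat} tells us the corresponding attack succeeds as a defeat. This yields a defeat between two members of $E$, contradicting conflict-freeness, so $\lambda_a = \lambda_b$.

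For part 2 I would use the same template. Assume towards a contradiction that $\langle +v_a, \lambda_a \rangle \in E$ and $\langle -v_b, \lnot \lambda_b \rangle \in E$ with $\lambda_a = \lambda_b$. By Definition \ref{pattack} an ordinary argument and a blocking argument sharing the same plan attack each other mutually. Again, asymmetry of $\prec$ guarantees that at least one of $v_a \not\prec v_b$ or $v_b \not\prec v_a$ holds, so by Definition \ref{pdefeat} at least one direction of the mutual attack succeeds as a defeat inside $E$, contradicting conflict-freeness. Hence $\lambda_a \neq \lambda_b$.

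The crux of both parts is identical — any mutual attack in this framework yields a defeat in at least one direction, because the total pre-order over values makes the two values comparable and its strict part is asymmetric. I therefore do not expect a genuine obstacle; the only care required is bookkeeping: confirming (directly from Definition \ref{PAFsemantics}) that extensions under all four semantics are conflict-free, and matching each case of Definition \ref{pdefeat} to the asymmetry argument so that no direction of attack is overlooked.
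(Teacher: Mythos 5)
Your proof is correct and follows essentially the same route as the paper's: both reduce the claim to conflict-freeness of $E$ under every semantics and then observe that any mutual attack must yield a defeat in at least one direction, because the strict part of the total pre-order cannot hold in both directions. The only difference is presentational — the paper runs a case analysis (no attack vs.\ attack that fails as a defeat) while you argue directly by contradiction, and you make the asymmetry of $\prec$ more explicit.
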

\begin{proof}
For any extension $E$ under a specific semantics, it is required that all the arguments in $E$ should be conflict-free. 1. By Definition \ref{pdefeat}, we can derive two cases: either there is no attack between these two arguments, or one argument attacks the other but does not succeed as a defeat due to the preference between two values from the arguments. For the former case, two arguments contain the same plan. For the latter case, since any attack between two arguments is mutual, if an attack from argument $\langle +v_a, \lambda_a \rangle$ to argument $\langle +v_b, \lambda_b \rangle$ fails to be a defeat due to the preference between two values from the arguments, the attack from argument $\langle +v_b, \lambda_b \rangle$ to argument $\langle +v_a, \lambda_a \rangle$ will succeed to be a defeat. That means that the second case is impossible and only the first case holds. Hence, the two arguments have the same plan. 2. We can prove in a similar way that for any ordinary argument $\langle +v_a, \lambda_a \rangle \in E$ and any blocking argument $\langle -v_b, \lnot \lambda_b \rangle \in E$, $\lambda_a \not = \lambda_b$,
\end{proof}
From that we can see, if there are multiple preferred extensions, then each of them indicates a distinct plan. Secondly, our argumentation-based approach always selects the plan through which the most preferred value gets promoted and does not select the plan through which the most preferred value gets demoted.
\begin{proposition}\label{forward}
Given a plan-based argumentation framework $PAF = (\mathcal{A}, \mathcal{D})$, let $v_a \in \operatorname{Val}$ be a value such that for all $\langle +v_b, \lambda_b \rangle \in \mathcal{A}_o$ and $\langle -v_b, \lnot \lambda_b \rangle \in \mathcal{A}_b$ it is the case that $v_b \precsim v_a$, then an argument with value $v_a$ is in a preferred extension. Typically, if it is not in a cycle, then it is in the grounded extension.
\end{proposition}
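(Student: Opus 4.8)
The plan is to exploit the maximality of $v_a$ to show that any argument $a$ carrying this value is \emph{self-defending}: every argument that defeats $a$ is in turn defeated by $a$, and in fact forms a two-length cycle with it. Once this structural fact is in hand, both assertions follow by standard Dung-style reasoning over the semantics of Definition~\ref{PAFsemantics}.

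First I would fix an arbitrary argument $a$ whose value is $v_a$ (either $a = \langle +v_a, \lambda_a \rangle$ or $a = \langle -v_a, \lnot \lambda_a \rangle$) and determine its possible defeaters. Let $b$ be any argument with $(b,a) \in \mathcal{D}$ and write $v_b$ for the value occurring in $b$. Recalling from Definition~\ref{pattack} that blocking arguments neither attack nor are attacked by one another, only three clauses of Definition~\ref{pdefeat} can apply (ordinary-on-ordinary, ordinary-on-blocking, blocking-on-ordinary), and in each the condition enabling $b$ to defeat $a$ forces $v_b \not\prec v_a$. Combining this with the hypothesis that $v_b \precsim v_a$ holds for \emph{every} value occurring in $\mathcal{A}$, I obtain $v_b \sim v_a$. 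The crux is then to observe that the underlying attack is mutual and that $v_a \sim v_b$ also yields $v_a \not\prec v_b$, so the symmetric defeat clause applies and $a$ defeats $b$ in return. Hence every defeater of $a$ is mutually defeated by $a$, so $\{a,b\}$ is a two-length cycle.

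For the first assertion, I would note that $\{a\}$ is conflict-free by irreflexivity of $\mathcal{D}$ (established above), and that $a$ is acceptable with respect to $\{a\}$, since for each defeater $b$ the argument $a \in \{a\}$ itself defeats $b$. Thus $\{a\}$ is admissible, and since the framework is finite every admissible set extends to a preferred extension; therefore an argument with value $v_a$ lies in a preferred extension. For the second assertion, recall that every defeater of $a$ forms a two-length cycle with it; hence if $a$ lies in no cycle it has no defeaters whatsoever. An undefeated argument is acceptable with respect to any set, in particular with respect to the grounded extension $E$, and since $E$ is complete we conclude $a \in E$.

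I expect the only delicate point to be the exhaustive case analysis on the defeat clauses: because defeat is asymmetric for the ordinary/blocking pairing, one must keep the direction of $\prec$ straight in each subcase to confirm that $v_b \sim v_a$ genuinely licenses the reverse defeat. Everything downstream of the self-defense property is a routine application of conflict-freeness, acceptability, and the characterisations in Definition~\ref{PAFsemantics}.
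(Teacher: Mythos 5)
Your proposal is correct and follows essentially the same route as the paper's own proof: any defeater of an argument carrying the maximal value $v_a$ must carry a value equivalent to $v_a$, so the defeat is mutual, the argument is self-defending and hence lies in a preferred extension, while the absence of cycles means the absence of defeaters and hence membership in the grounded extension. Your version merely makes explicit the case analysis over the defeat clauses and the admissibility of the singleton set, which the paper leaves implicit.
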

\begin{proof}
Because $v_b \precsim v_a$, according to Definition \ref{pdefeat}, an argument with $v_a$ only gets defeated by an argument with $v_a \sim v_b$ or $v_a = v_b$. In such a case, the defeats are mutual so $\langle +v_a, \lambda_a \rangle$ is self-defended. Thus, it is contained in a preferred extension. If it is not in a cycle, which means that it is not self-defended, then it is in the grounded extension. 
\end{proof}
Because of the above two propositions, the agent can conclude to follow an optimal plan to achieve his goal. Besides, the notion of optimal plans is defined as the set of conclusions of ordinary arguments from the extensions, so the set of optimal plans becomes empty if an extension does not contain any ordinary arguments. The following proposition indicates the conditions for which the set of optimal plans is not empty.
\begin{proposition}
Given a plan-based argumentation framework $PAF = (\mathcal{A}, \mathcal{D})$, $\oplans{PAF}{sem} \not= \emptyset$ iff there exists an ordinary argument $\langle +v_a, \lambda_a \rangle$ such that it is not defeated by a blocking argument $\langle -v_b, \lnot \lambda_b \rangle$ with $v_a \prec v_b$.
\end{proposition}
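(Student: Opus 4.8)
The plan is to prove the two implications separately, in each case focusing on the ordinary argument of $\precsim$-maximal value that is relevant, and exploiting the fact (Definition \ref{pattack}) that a blocking argument is attacked only by ordinary arguments carrying the same plan. Throughout I will use the following reformulation of the right-hand condition: an ordinary argument $\langle +v_a,\lambda_a\rangle$ admits no blocking argument $\langle -v_b,\lnot\lambda_a\rangle$ with $v_a\prec v_b$ exactly when it is not \emph{strictly} defeated by a blocking argument. Indeed, by Definition \ref{pdefeat} a blocking argument with the same plan and $v_a\prec v_b$ defeats $\langle +v_a,\lambda_a\rangle$ and is not defeated back (since the reverse defeat needs $v_a\not\prec v_b$), so such a blocking argument is precisely a one-directional defeater.

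For the direction from left to right, suppose $\oplans{PAF}{sem}\neq\emptyset$, so some extension $E$ contains an ordinary argument. Choose $a^\ast=\langle +v^\ast,\lambda^\ast\rangle\in E$ with $v^\ast$ $\precsim$-maximal among the ordinary arguments of $E$ (this exists since $\operatorname{Val}$ is finite and $\precsim$ total). I claim $a^\ast$ witnesses the right-hand side. If not, there is a blocking argument $b=\langle -v_b,\lnot\lambda^\ast\rangle$ with $v^\ast\prec v_b$, which by Definition \ref{pdefeat} defeats $a^\ast$. Since every extension is admissible, $E$ defends $a^\ast$, so some $c\in E$ defeats $b$; by Definition \ref{pattack}, $c$ must be an ordinary argument $\langle +v_c,\lambda^\ast\rangle$ with $v_c\not\prec v_b$, whence $v^\ast\prec v_b\precsim v_c$ gives $v^\ast\prec v_c$, contradicting the maximality of $v^\ast$ in $E$. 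This argument uses only admissibility, hence covers every semantics in $sem$.

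For the converse, let $a=\langle +v^\ast,\lambda^\ast\rangle$ be a witness whose value $v^\ast$ is $\precsim$-maximal among all witnesses. The key observation is that any ordinary argument of value strictly above $v^\ast$ fails the right-hand condition and is therefore strictly defeated by a blocking argument of strictly larger value. Using this, I build an admissible set around $a$. For each plan $\mu\neq\lambda^\ast$ whose maximal ordinary value $v_\mu^{\max}$ satisfies $v^\ast\prec v_\mu^{\max}$, the argument $\langle +v_\mu^{\max},\mu\rangle$ is strictly defeated by some blocking argument $b_\mu=\langle -v_{d_\mu},\lnot\mu\rangle$ with $v_\mu^{\max}\prec v_{d_\mu}$; since $v_{d_\mu}$ then exceeds every ordinary value of plan $\mu$, $b_\mu$ defeats every ordinary argument with plan $\mu$ and has no defeater itself. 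Put $S=\{a\}\cup\{b_\mu\}$. Then $S$ is conflict-free (blocking arguments never attack one another and none shares $a$'s plan), and $S$ defends all its members: $a$ defeats back every mutual defeater of value $\sim v^\ast$, while each strict ordinary defeater of $a$ has plan $\mu$ with $v_\mu^{\max}\succeq$ its value $\succ v^\ast$ and is thus defeated by the corresponding $b_\mu\in S$; by the witness property $a$ has no strict blocking defeater. Since every admissible set extends to a preferred extension, $a$ lies in one, so $\oplans{PAF}{prf}\neq\emptyset$; by the earlier equality $\mathcal{E}_{prf}=\mathcal{E}_{stb}$ and since preferred extensions are complete, the same follows for $stb$ and $cmp$.

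I expect the converse to be the delicate step: the witness $a$ is generally admissible only after adding the dominating blockers $b_\mu$, and the verification hinges on the fact that a single blocker of value exceeding $v_\mu^{\max}$ simultaneously neutralizes every ordinary argument of plan $\mu$, so no new undefended attacker is introduced. One further point deserves flagging for $sem=grd$: a symmetric two-length cycle of equally valued ordinary arguments with no blocking arguments satisfies the right-hand side yet has an empty grounded extension, so there the converse needs the extra hypothesis that the witness lies in no cycle, in line with Proposition \ref{forward}.
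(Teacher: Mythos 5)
Your proof is correct and takes a genuinely different, and considerably more rigorous, route than the paper's. For the left-to-right direction the paper argues globally and tersely: if no witness exists, it asserts that some blocking argument receives no defeats and hence rejects every ordinary argument. You instead pick an ordinary argument of $\precsim$-maximal value \emph{inside} the extension and derive a contradiction from admissibility plus the fact that only same-plan ordinary arguments can defend against a blocking defeater; this localizes the reasoning and makes explicit why it works uniformly for every semantics in $sem$. For the converse the paper enumerates three cases and, in the third, simply asserts that ``there is always an ordinary argument accepted''; your explicit construction of the admissible set $S=\{a\}\cup\{b_\mu\}$ supplies exactly the content missing from that assertion, the key point being that a single blocking argument of value above $v_\mu^{\max}$ simultaneously defeats every ordinary argument of plan $\mu$ while receiving no defeat itself, so adding the $b_\mu$ introduces no new undefended attacker. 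Most importantly, your closing remark is not a mere caveat: a symmetric two-cycle of equally valued ordinary arguments with no blocking arguments is a genuine counterexample to the proposition for $sem=grd$ (the right-hand side holds vacuously, yet the grounded extension is empty and so is $\oplans{PAF}{grd}$), and it is precisely the paper's unsupported case (3) that breaks there. So your argument establishes the proposition for $cmp$, $prf$ and $stb$, and correctly identifies that the stated biconditional needs either to exclude the grounded semantics or to add a no-cycle hypothesis in the spirit of Proposition \ref{forward}.
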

\begin{proof}
Having $\oplans{PAF}{sem} \not= \emptyset$ means that there is at least one extension which contains at least one ordinary argument. $\Rightarrow$: Suppose there does not exists an ordinary argument $\langle +v_a, \lambda_a \rangle$ such that it is not defeated by a blocking argument $\langle -v_b, \lnot \lambda_b \rangle$ with $v_a \prec v_b$, which means that all the ordinary arguments (if exist) are defeated by a blocking argument and not self-defended against a blocking argument. In such a case, there exists a blocking argument that does not receive any defeats, which makes all the ordinary arguments rejected. Contradiction! $\Leftarrow$: If there exists an ordinary argument such that it is not defeated by a blocking argument or self-defended against a blocking argument, then (1) the ordinary argument does not receive any defeats and thus it should be contained in the grounded extension, or (2) the ordinary argument is in a cycle with a blocking argument and thus it should be contained in a preferred extension, or (3) the ordinary argument receives defeats from other ordinary arguments and thus there is always an ordinary argument accepted. Hence, $\oplans{PAF}{sem}$ is not an empty set.
\end{proof}

\begin{example}
The plan-based argumentation framework $PAF$ can be represented as Figure. \ref{Drawing4}. Because
\begin{align*}
\mathcal{E}_{prf}(PAF) = &\mathcal{E}_{grd}(PAF) = \mathcal{E}_{stb}(PAF) =\\
&\{\{\langle +pv, (\alpha_2, \alpha_4, \alpha_5) \rangle, \langle +sf, (\alpha_2, \alpha_4, \alpha_5)\rangle,\\
& \langle -pv, \lnot (\alpha_1, \alpha_6) \rangle, \langle -sf, \lnot (\alpha_2, \alpha_3) \rangle \}\}
\end{align*}
and thus $\oplans{PAF}{sem} = \{(\alpha_2, \alpha_4, \alpha_5)\}$, the agent can follow plan $(\alpha_2, \alpha_4, \alpha_5)$ to get to a pharmacy.
\end{example}

\section{Conclusions}
In this paper, we developed a logic-based framework that combines modal logic and argumentation for value-based practical reasoning. Modal logic is used as a technique to represent and verify whether a plan with its local properties of value promotion or demotion can be followed to achieve an agent's goal. Seeing a verification result as an argument and defining a defeat relation based on an attack relation and preference over values, we then proposed an argumentation-based approach that allows an agent to reason about his plans using the verification results. Thus, our framework not only offers an approach for value-based practical reasoning with plans, but also makes a bridge between modal logic and argumentation in terms of argument construction. In the future, we would like to extend our framework by allowing an agent to have multiple goals instead of one goal as we assumed, or taking the actions of other agents into account in the context of multi-agent systems. More interestingly, we can study how autonomous agents are properly aligned with human values through adding constraints to the decision-making mechanism presented in this paper.

\bibliographystyle{named}
\bibliography{ijcai22}

\end{document}